\newtheorem{theorem}{Theorem}
\newtheorem{corollary}{Corollary}
\newcommand{\bettercomment}[1]{\hfill$\blacktriangleright$ #1}
\newcommand{\abr}[1]{\textsc{#1}}
\icmltitlerunning{Active Information Acquisition}
\begin{document} 

\twocolumn[
\icmltitle{Active Information Acquisition}


\icmlauthor{He He}{hhe@umiacs.umd.edu}
\icmladdress{University of Maryland, College Park, MD 20742 USA}
\icmlauthor{Paul Mineiro, Nikos Karampatziakis}{\{pmineiro,nikosk\}@microsoft.com}
\icmladdress{Microsoft CISL, 1 Microsoft Way, Redmond, WA 98052 USA}

\icmlkeywords{reinforcement learning, learning to search, sequential prediction}

\vskip 0.3in
]

\begin{abstract} 
We propose a general framework for sequential and dynamic acquisition
of useful information in order to solve a particular task. While our
goal could in principle be tackled by general reinforcement learning,
our particular setting is constrained enough to allow more efficient
algorithms. In this paper, we work under the Learning to Search framework
and show how to formulate the goal of finding a dynamic information
acquisition policy in that framework. We apply our formulation on two
tasks, sentiment analysis and image recognition, and show that the learned
policies exhibit good statistical performance. As an emergent byproduct,
the learned policies show a tendency to focus on the most prominent
parts of each instance and give harder instances more attention without
explicitly being trained to do so.
\end{abstract} 

\section{Introduction}
\label{introduction}

In the supervised learning framework, a learning algorithm is given
example input-output pairs with which to model the desired behaviour.
However real life autonomous agents must dynamically acquire the
information they need for decisions based upon goals and current
knowledge.  Thus the information required varies across different
instances of the problem.  Furthermore, given a time or expense budget,
an algorithm can attempt to balance a trade-off between cost of acquiring
information (and reasoning about it) and quality of the result.  These
considerations apply both to understanding psychophysical phenomena
such as planning saccades~\cite{araujo2001eye} and to developing
practical solutions to problems such as early classification of time
series~\cite{dachraoui2015early}.

We propose a general-purpose framework that sequentially processes
the input, adaptively selects parts of it, and combines the acquired
information to make predictions.  Our framework can be applied to any
base model (e.g. generalized linear models, neural networks) with any
information unit (e.g. features, feature groups or pieces of raw input).

Specifically, given a prediction task, our goal is to learn a \emph{task
predictor} and an \emph{information selector}.  The task predictor takes
information acquired by the selector and generates outputs defined by
the specific task, such as object classes for image classification.
The information selector acquires pieces of information based on past
information and intermediate predictions given by the task predictor.
We model this dynamism as a sequential decision-making process as
shown in Figure~\ref{fig:testtime}, where we make a decision about
which information to acquire at each step. The process stops when the
model decides that enough information has been obtained and outputs its
final prediction.  We use the Learning to Search (L2S)~\cite{daume14l2s}
framework, which casts searching for a good policy as an imitation
learning problem: at training time we have access to (can simulate) a
reference policy which is possibly accessing the training labels, and the
goal is to induce a policy that mimics the reference policy at test time.

Our contribution is an active information acquisition model that is
flexible enough to apply to different tasks with different predictors
and information units.  Our model explicitly minimizes a user-specified
trade-off between cost on information and quality of prediction.
We quantify the trade-off as the loss function for L2S. As there are no
constraints on the loss function, our model can accommodate different
types of loss defined by a task and even loss functions that do not
decompose nicely over the search space.\footnote{In some applications,
the cost of a piece of information may depend on whether another piece
of information has been acquired or not.} The L2S framework additionally
requires the specification of a search space, and a reference policy.
Our formulation for these ingredients in the case of active information
acquisition is detailed in Section~\ref{sec:l2s}.

We evaluate our algorithm on a sentiment analysis task with a bag-of-words
predictor, and an image classification task with a convolutional
neural network (CNN).  Our algorithm achieves better results than
static information selection baselines on both tasks.  Additionally,
we show that the dynamic selector learns to acquire more information
for difficult examples than easy examples.

\begin{figure*}
\begin{tabular}{cc}
\begin{minipage}{0.45\textwidth}
\begin{center}
\includegraphics[scale=0.5]{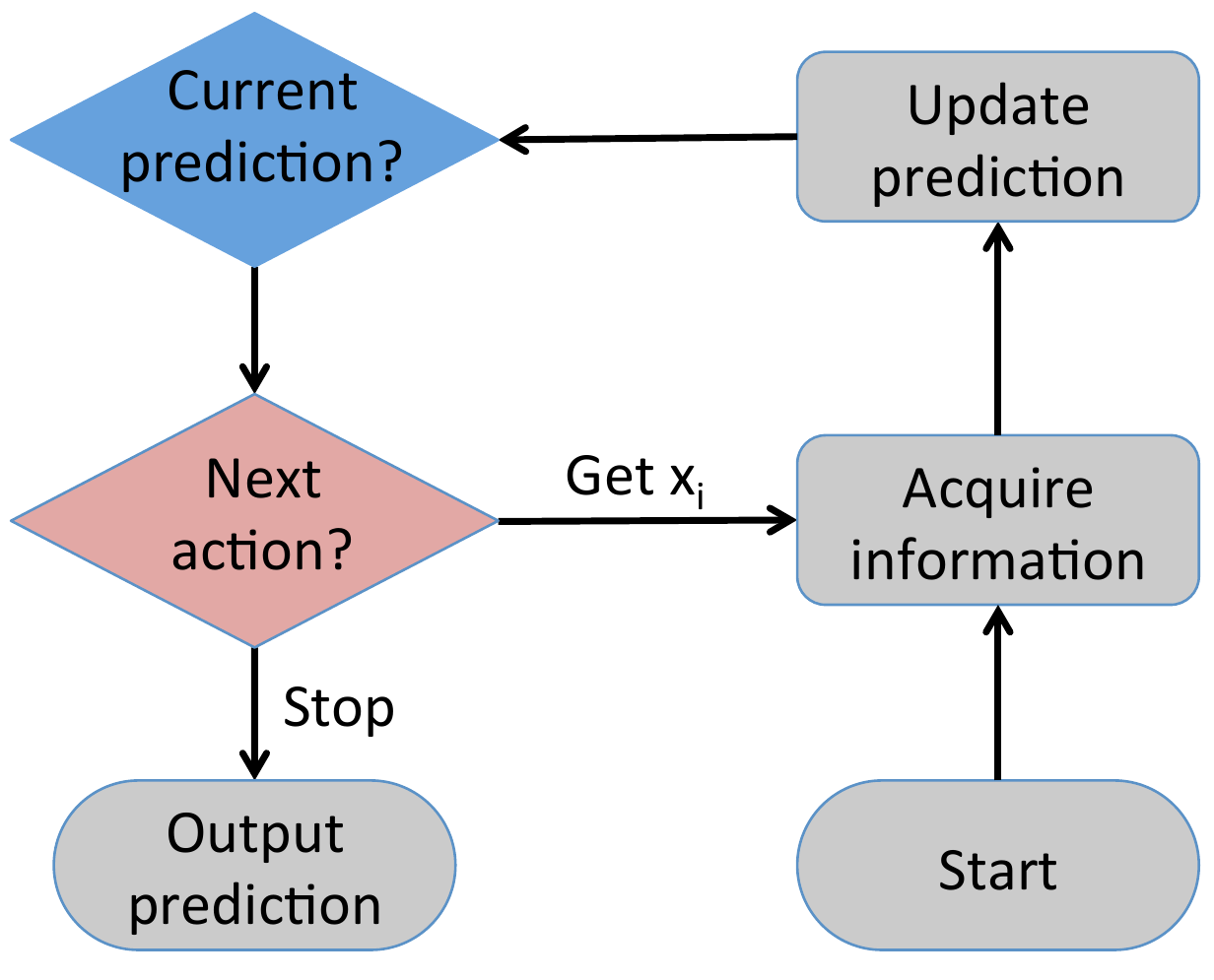}
\end{center}
\end{minipage} &
\begin{minipage}{0.5\textwidth}
\begin{algorithm}[H]
\caption{\textsc{Predict} ($\mathcal{X}, \mathcal{X}'_0, h, \pi$)}
\label{alg:testtime}
\begin{algorithmic}[1]
\FOR{$t=0$ to $|A(\mathcal{X}'_0)|$}
\STATE $\hat{y}_t  \gets h(\mathcal{X}'_{t})$ \bettercomment{Intermediate prediction}
\STATE $a_t \gets \pi(\mathcal{X}'_{t}, \hat{y}_t)$ \bettercomment{Select information} 
\IF{$a_t=$ \abr{stop}}
\STATE return $\hat{y}_t, \mathcal{X}'_{t}$ \bettercomment{Early stop, return terminal state}
\ELSE 
\STATE $\mathcal{X}'_{t+1} \gets \mathcal{X}'_t\,\bigcup\,\{x_{a_t}\}$ \bettercomment{Add new information}
\ENDIF 
\ENDFOR
\end{algorithmic} 
\end{algorithm} 
\end{minipage}
\end{tabular}
\caption{Information acquisition at test time. \emph{Left}: a flowchart
of our algorithm. The blue diamond and the red diamond represent the
task predictor and the information selector respectively. \emph{Right}:
pseudocode of the execution.}
\label{fig:testtime}
\end{figure*}

\section{Related Work}
\label{relatedwork}

The topic of learning information gathering policies has received
much interest lately. Many of the proposals in this space however use
general Markov decision process (MDP) techniques, which are sufficient
but perhaps not necessary given the constrained, deterministic world
of sequential selection.  

\citet{kanani2012selecting} learn a policy for filling in missing entries
in a knowledge base, where the actions are querying a search engine,
downloading a page or extracting information from a page. For learning
the policy, they use temporal difference Q-learning and briefly mention
potentially more efficient techniques but always within the general MDP
learning framework.

Our work is closest to \citet{dulac2011text,dulac2014image}, who explored
sequential text and image classification with results analogous to
our experiments.
The authors proposed reinforcement learning techniques with adaptation
to different tasks, while our approach is general and efficient enough to
apply to a range of problems.  More importantly, when the complete inputs
are available (but hidden to the learning algorithm), we can compute
a good reference policy and incorporate it into L2S through imitation
learning for more efficient training.  Another important distinction is
that they use a single policy as both the task predictor and information
selector.  This formulation has a larger search space compared to ours
and does not leverage pre-training of the task predictor.  In addition,
it might face difficulty in complex domains where the predictor and the
selector need different function classes.

\citet{mnih2014recurrent} explored sequential visual inspection for
image classification, with results analogous to our image classification
experiment.  Important technical differences are the use of policy
playouts and the specific use of recurrent neural networks.  Our approach
admits the use of recurrent neural networks for either the predictor
or selector components, but does not require it.  In other words, the
model is a special case of our framework with particular choices for the
predictor and selector components.  Furthermore, that work demonstrated
improved aggregate performance with diminishing returns for fixed budgets
of sensor utilization, but do not consider policies which make a variable
number of sensory measurements.  Similar comments apply to the recent
visual attention work of~\citet{ba2015learning}.

Our loss function quantifies the information-accuracy trade-off.
Any approach leveraging general reinforcement learning can optimize such
a loss: nonetheless, the prior art above did not do so.  This trade-off
can be critical in practical applications, e.g., minimum cost spam
filtering~\cite{blanzieri2008survey}, and has been treated explicitly
in the case of classifier cascades~\cite{ChenXuWeinbergerChapelle2012}
and early classification of time series~\cite{dachraoui2015early}.

Our work is also related to dynamic feature selection.
\citet{hhe2012coaching} used DAgger~\cite{ross11dagger} to select
features sequentially with a loss function similar to ours.  DAgger is
a specific implementation of L2S that does not consider cost of errors,
and we observe degrading results with uniform cost in our experiments. In
addition, they consider information selection on the feature level only.
In \citet{gao11active}, classifiers are selected dynamically based on
their value of information under a probabilistic framework.  Again, they
consider a particular form of information---observation presented as
classification results---while we embrace a broader class of information.

\citet{poczos2009learning} consider the problem of learning a stopping
policy to maximize expected reward per unit time given a fixed sequence
of classification strategies with variable associated temporal costs.
A key distinction from this work is that the sequence of classification
strategies is fixed, rather than trained jointly with the stopping policy.

\section{Active Information Acquisition Framework}
\label{sec:framework}
We assume that the input data $x$ can be decomposed to multiple parts,
such that $\mathcal{X}=\bigcup_{i=1}^{n}\{x_i\}$, where $n$ is the number
of parts. We denote a partial input by $\mathcal{X}'$, where $\mathcal{X}'
\subseteq \mathcal{X}$.  It is straightforward to extend the framework
to input data with variable number of parts per example but we do not
for ease of exposition.

Our framework consists of a task predictor $h$ and an information selector
$\pi$, which interact as shown in Figure~\ref{fig:testtime}.  Both $h$
and $\pi$ access the input through feature maps, which we omit here to
simplify notation.

The task predictor $h$ transforms a partial input into a prediction $\hat
y = h (\mathcal{X}')$, e.g., for a multiclass problem the task predictor
can take a partial input and produce a distribution over the labels.

The information selector is a policy $\pi \in \Pi$ that takes as
input a state, which summarizes the information collected so far and
any previous prediction(s), and outputs an action to take next: $a =
\pi(s)$.  The actions are (a) to acquire a new piece of information
(and to specify which one) and (b) to stop and output the current
prediction.  The complete set of actions is $A = \{ 1, \ldots, n
\}\,\bigcup\,\{\text{\abr{stop}}\}$.  Added information is excluded
from the action set, and we use $A(\mathcal{X}')$ to denote the action
set specific to $\mathcal{X}'$, including non-selected information
and \abr{stop}.

Our framework allows task-dependent choices of the learning components
$h$ and $\pi$.  However, because these components must be able to work
with any subset of input parts, idiosyncratic changes are required for
different choices of $h$.  Handling missing and incomplete data is an area
with an extensive literature.  For our experiments, we find the following
simple strategy effective: augmenting the input with an additional binary
variable per part indicating whether or not a part has been observed
and setting the feature values for the unobserved parts to 0.

\section{Learning to Search for Information}
\label{sec:l2s}

Our framework builds on top of the Learning to Search~\cite{daume14l2s}
(L2S) paradigm, which allows us to jointly train the (interdependent)
information selector and the task predictor via a reduction to online
cost-sensitive classification.

The L2S algorithm requires three components: a \emph{search space}
which defines states, actions, and transitions, a \emph{loss function}
to evaluate the result given an action sequence, and a \emph{reference
policy} that suggests good actions given any state during training.
Essentially, L2S learns a policy that \emph{imitates} the reference
policy, assuming that the reference policy attains good performance.
Below we describe details of each component in our setting and the
training algorithm.

\begin{figure}[t]
\centering
\includegraphics[width=0.45\textwidth]{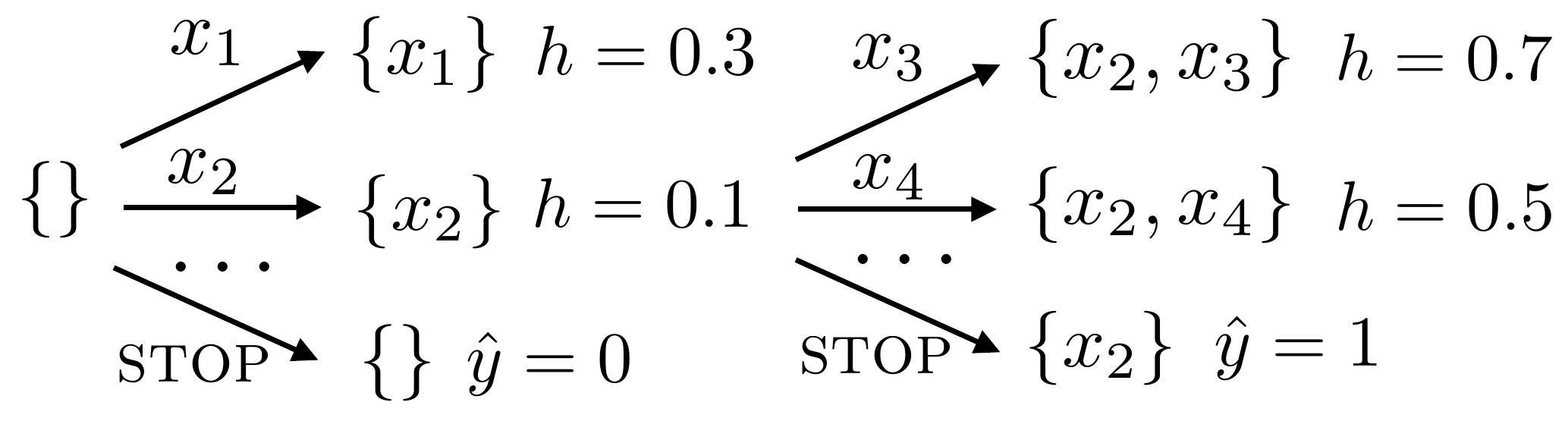}
\caption{An illustration of the search space. Starting with an empty set, information is acquired sequentially and intermediate predictions are made at each step.}
\label{fig:search_space}
\end{figure}

\begin{algorithm*}
\caption{L2S Joint Training}
\begin{algorithmic}[1]
\STATE Initialize $\pi_1$, $h \gets h_0$ \bettercomment{Pre-trained task predictor}
\FOR{$i=1$ to $N$ (loop through examples)}
  \STATE $\mathcal{D} \gets \emptyset$, $\mathcal{X}'_i \gets \emptyset$ 
  \WHILE{$A(\mathcal{X}'_i) \neq \emptyset$}
    \STATE $\hat{y}_{i}  \gets h(\mathcal{X}'_{i})$ \bettercomment{Intermediate prediction}
    \FOR{$a \in A(\mathcal{X}'_i)$ (one-step deviation)}
      \STATE $\hat{y}_{i,a}, \mathcal{X}'_{i,a} \gets$ Execute $a$ and roll out until termination as in \abr{Predict} 
      \STATE $c(a) = \ell(\hat{y}_{i,a}, y_i, \mathcal{X}'_{i,a})$ \bettercomment{Evaluate end loss}
    \ENDFOR
    \STATE $c(a) \gets c(a) - \min_a c(a)$  
    \STATE $\mathcal{D} = \mathcal{D}\,\bigcup\,\{c, (\mathcal{X}'_i, \hat y_i)\}$ \bettercomment{Collect example}
    \STATE $a \gets \pi(\mathcal{X}'_{i}, \hat{y}_{i})$ \bettercomment{Execute current policy} 
    \IF{$a=\text{\abr{stop}}$}
      \STATE Update $h$ with $(\mathcal{X}'_i, \hat{y}_i)$ \bettercomment{Fine-tune}
      \STATE {\bf break}
    \ELSE
      \STATE $\mathcal{X}'_{i} \gets \mathcal{X}'_i\,\bigcup\,\{x_{i,a}\}$ 
    \ENDIF
  \ENDWHILE
  \STATE $\pi_{i+1} \gets$ Update $\pi_i$ with $\mathcal{D}$ \bettercomment{Train policy}
\ENDFOR
\STATE Return the average policy $\pi$ of $\pi_1,\ldots,\pi_N$
\end{algorithmic}
\label{alg:l2s}
\end{algorithm*}

\paragraph{Search space}
Our state is a tuple of a partial input and an intermediate prediction:
$s=(\mathcal{X}', \hat y)$.  The action set for $\mathcal{X}'$ is
$A(\mathcal{X}')$, which is defined in Section~\ref{sec:framework}.
We do not ask for the same piece of information more than once by
disallowing actions corresponding to observed parts.  This restriction
is not necessary in other scenarios, such as a robot learning to
act in a dynamic environment where the same part of the world may
change over time.  An illustration of the search space is shown
in Figure~\ref{fig:search_space}.  After an action is taken,
the current state transitions to a new one deterministically by
adding the new information or terminating the process, as shown in
Algorithm~\ref{alg:testtime}, line 4--8.

\paragraph{Loss function}
To learn a trade-off between the amount of information and the quality
of the prediction, we define the loss function as
\begin{equation}
\ell(\hat{y}, y, \mathcal{X}') = \ell_{\text{task}}(\hat{y}, y) + \lambda \cdot \mathcal{C}(\mathcal{X}').
\end{equation}
Here $\ell_{\text{task}}$ is the loss function defined by the
task, which does not have to be convex, e.g., 0-1 loss, squared
loss.  $\mathcal{C}$ is the cost function of information.  In our
experiments, we set $\mathcal{C}=|\,\cdot\,|/n$, which computes the
percentage of parts acquired.  However, an arbitrary function of
$\mathcal{X}'$ can be used for acquisition cost, e.g., for variable
feature cost~\cite{ChenXuWeinbergerChapelle2012} or nonuniform cost
of delay~\cite{dachraoui2015early}.  We use $\lambda$ to control the
penalty on acquiring more information.  By varying $\lambda$ we can
construct a Pareto curve of cost vs. loss.

Since we compute intermediate predictions, the loss function can be
applied to results at any time step.  We call the loss at the end the
\emph{terminal loss} and those at earlier time steps the \emph{immediate
loss}, and our goal is to learn policies that minimize the expected
terminal loss.

\paragraph{Reference policy}
We use a greedy reference policy $\pi^\ast$ that always chooses the next
piece of information that yields the lowest immediate loss.  Formally,
\begin{equation*}
\pi^\ast(s_t) = \arg\min_a \ell(\hat{y}_t, y, \mathcal{X}'_t\,\cup\,\{x_a\}),
\end{equation*}
where $\{ x_{\textsc{stop}} \} \doteq \emptyset$.  As the performance of L2S
depends much on the quality of the reference policy, we analyze in
Section~\ref{sec:analysis} when a greedy policy is optimal and how
suboptimality affects the result. We have also verified that this policy
is performing well on the tasks in Section~\ref{sec:experiments}, in
fact leading the learned policy by a large margin.  Unlike our learned
policy however, the reference policy makes use of the training label
and therefore cannot be used at test time.

\paragraph{Joint Training}
During training, L2S calls the \textsc{Predict} function
(Algorithm~\ref{alg:testtime}) many times to explore different action
sequences and to discover the ones that have a low terminal loss,
similar to other reinforcement learning techniques.  However, with a
reference policy, L2S can explore the search space more efficiently
by initially focusing on areas close to the action sequences generated
by the reference policy and gradually deviating away by following the
learned policy~\cite{daume14l2s}.

We show the training procedure in Algorithm~\ref{alg:l2s}.  For each
example, we collect a set of cost-sensitive multiclass examples, where
class labels correspond to actions.  First an initial trajectory is
generated (roll in) by the current learned policy $\pi_i$,\footnote{We can
also roll in with a mixture of the reference policy and the learned policy
and gradually decrease the mixing weight of the reference policy. We did
not observe significant difference by using a mixture roll-in policy.}
then from the arrived state, the reference policy is executed until the
terminal state (roll out) to derive the terminal loss of each action.
The cost assigned to an action in a given state is the difference between
its loss and the minimum loss for the state (Algorithm~\ref{alg:l2s},
line 10).  Rolling in with the learned policy guarantees that states
of the collected examples are representative of states encountered at
test time.  Given tuples of state, action and loss as training examples,
the policy learning problem is reduced to standard cost-sensitive
multiclass classification.

We assume that an initial task predictor is given and intermediate
predictions are generated by calling it.  To initialize a task predictor
beforehand, we pre-train one on a small portion of the training data,
e.g., by using randomly sampled subsets of parts.  This pre-training
distribution is presumably unlike the one induced by a mature selector.
To mitigate this, we fine-tune the task predictor during training with
inputs generated by the information selector after each update (line
13--16 in Algorithm~\ref{alg:l2s}).\footnote{In practice, fine-tuning may
happen after some iterations when the selector is relatively stable.}
In other words, we adjust the task predictor $h$ to reduce the loss of
each intermediate prediction on the partial input sequences generated
by the selector $\pi$.

\begin{figure*}
\begin{tabular}{cc}
\begin{minipage}{0.5\textwidth}
\begin{center}
\includegraphics[width=0.9\columnwidth]{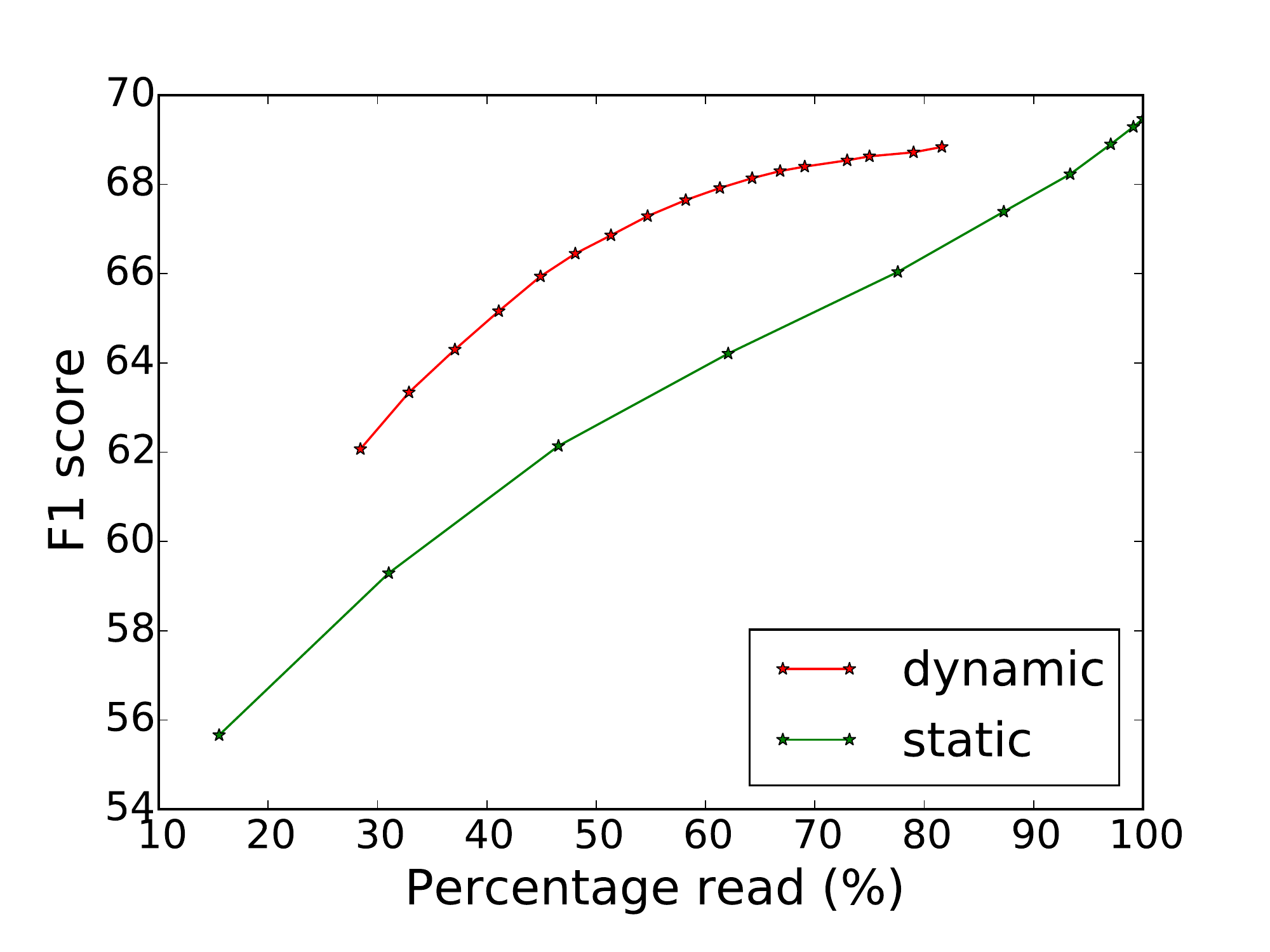}
\end{center}
\end{minipage}
&
\begin{minipage}{0.5\textwidth}
\begin{center}
\hspace{-3em}
\includegraphics[width=0.9\columnwidth]{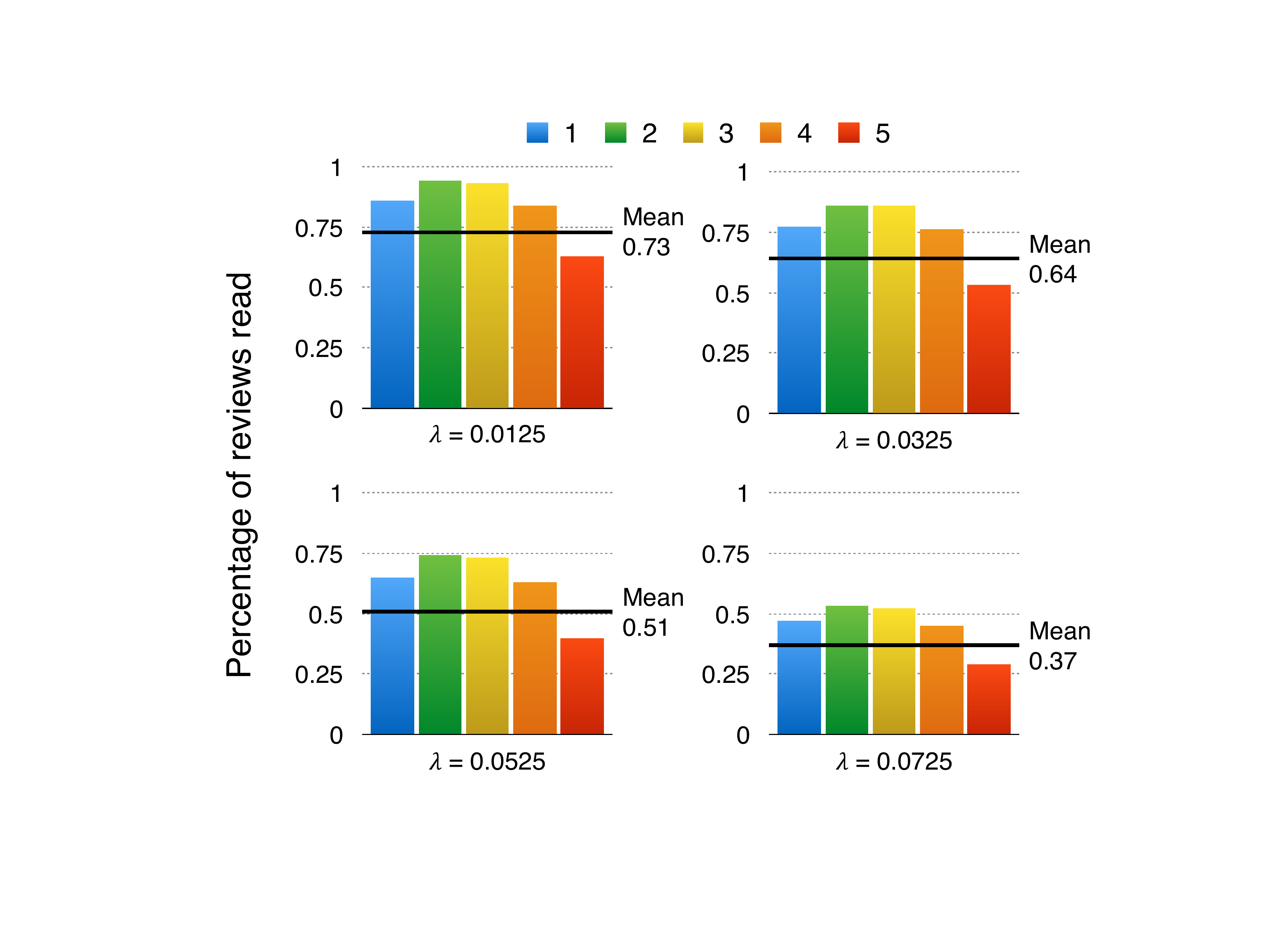}
\hspace{3em}
\end{center}
\end{minipage}
\end{tabular}
\caption{TL;DR performance on test data. \emph{Left}: comparison between the Pareto frontiers of AIA (dynamic) and static selection. \emph{Right}: Average fraction of sentences read as a function of (unobserved) rating, for a particular $\lambda$.}
\label{fig:tldr-pareto}
\end{figure*}

\section{Analysis}
\label{sec:analysis}
We now analyze the quality of the information selector returned
by Algorithm~\ref{alg:l2s}.  As L2S minimizes loss relative to the
reference policy, we measure performance of the learned policy by regret
to $\pi^\ast$.  We first present the regret guarantee of L2S, then extend
the result to our setting of information selection.

The loss of a policy $J(\pi)$ is defined as the expected terminal loss,
and the expectation is taking over distribution of the states induced
by running $\pi$.  We use $Q^\pi(s,a)$ to represent the terminal loss of
executing action $a$ in state $s$ and then following policy $\pi$ until
the terminal state.  We denote by $d_\pi^t$ the distribution of states
at step $t$ when running policy $\pi$ and $d_\pi=\frac{1}{T}\sum_{t=1}^T
d_\pi^t$, where $T$ is the horizon length, namely the maximum number of
parts of the input.  Thus we have
\begin{equation*}
J(\pi) = \mathbb{E}_{s \sim d_\pi} [Q^\pi(s,\pi(s))] ,
\end{equation*}
Henceforth, we use $Q^\pi(s,\pi')$ as a shorthand for $Q^\pi(s,\pi'(s))$.

L2S has the following regret guarantee:
\begin{theorem}
When using a no-regret cost-sensitive learner, 
the policy returned by Algorithm~\ref{alg:l2s} after $N$ steps satisfies
\begin{equation*}
J(\pi) - J(\pi^\ast) \le T\epsilon_{N},
\end{equation*}
where $\epsilon_{N}$ is defined as
\begin{equation*}
\frac{1}{NT}\sum_{i=1}^N\sum_{t=1}^T\mathbb{E}_{s_t \sim d_\pi^t} \left[ 
Q^{\pi^\ast}(s_t,\pi) - \min_a Q^{\pi^\ast}(s_t,a)
\right] .
\end{equation*}
\label{thm:l2s}
\end{theorem}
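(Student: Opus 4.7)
The proof will follow the standard DAgger-style regret analysis (\citet{ross11dagger,daume14l2s}) specialized to our terminal-loss, deterministic-transition setting. The argument combines a performance-difference lemma with a reduction to online cost-sensitive classification.

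For the first step, I would establish a performance difference identity. Since transitions are deterministic and $V^{\pi^\ast}$ at a terminal state equals the terminal loss, we have $V^{\pi^\ast}(s_{t+1}) = Q^{\pi^\ast}(s_t, a_t)$ along every trajectory. Telescoping $V^{\pi^\ast}$ from $s_0$ to $s_T$ along trajectories of $\pi$ yields
\begin{equation*}
J(\pi) - J(\pi^\ast) \;=\; \sum_{t=1}^T \mathbb{E}_{s_t \sim d_\pi^t}\bigl[Q^{\pi^\ast}(s_t, \pi) - V^{\pi^\ast}(s_t)\bigr].
\end{equation*}
Using $V^{\pi^\ast}(s) = Q^{\pi^\ast}(s,\pi^\ast) \ge \min_a Q^{\pi^\ast}(s,a)$ to bound the advantage from above gives $J(\pi) - J(\pi^\ast) \le T \cdot \mathbb{E}_{s \sim d_\pi}[Q^{\pi^\ast}(s,\pi) - \min_a Q^{\pi^\ast}(s,a)]$.

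The right-hand side is exactly a cost-sensitive classification risk: inspecting Algorithm~\ref{alg:l2s}, lines 7--10, each training example collected at a state $s$ along a roll-in with $\pi_i$ has action-cost vector $c(a) = Q^{\pi^\ast}(s,a) - \min_a Q^{\pi^\ast}(s,a)$, because the reference policy is used for the roll-out in line 7 and line 10 subtracts the minimum. A no-regret cost-sensitive learner therefore guarantees that the average of these risks over iterations is at most the oracle risk plus $o(1)$, and a short calculation identifies this quantity with $\epsilon_N$.

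The main obstacle is the online-to-batch reconciliation of state distributions: the no-regret guarantee controls $\frac{1}{N}\sum_i \mathbb{E}_{s \sim d_{\pi_i}}[\cdot]$ evaluated at the iterates $\pi_i$ on their own distributions, whereas the theorem phrases the bound in terms of the single returned $\pi$ and $d_\pi$. I would follow the standard DAgger fix of returning the uniform mixture over $\pi_1,\dots,\pi_N$ (consistent with the final ``Return the average policy'' step of Algorithm~\ref{alg:l2s}), so that $d_\pi = \frac{1}{N}\sum_i d_{\pi_i}$ by construction and $Q^{\pi^\ast}(s,\pi) = \frac{1}{N}\sum_i Q^{\pi^\ast}(s,\pi_i)$ by linearity; under this identification the step-1 bound applied to the mixture combines with the no-regret guarantee to yield $J(\pi) - J(\pi^\ast) \le T\epsilon_N$ as claimed. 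Alternatively, one could return the best iterate on a held-out sample and pay an additional Hoeffding-style concentration term.
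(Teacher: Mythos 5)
The paper contains no in-text proof of this theorem---it refers the reader to \citet{daume15lols}---and your argument is exactly the standard analysis from that line of work: the performance-difference/telescoping identity $J(\pi)-J(\pi^\ast)=\sum_t \mathbb{E}_{s_t\sim d_\pi^t}\bigl[Q^{\pi^\ast}(s_t,\pi)-V^{\pi^\ast}(s_t)\bigr]$ with reference-policy roll-outs, identification of the resulting advantage term with the cost-sensitive examples collected under roll-in with $\pi_i$, and online-to-batch conversion via the returned average (mixture) policy so that $d_\pi=\frac{1}{N}\sum_i d_{\pi_i}$. Your proposal is therefore correct and takes essentially the same route as the proof the paper relies on; the only small imprecision is that the pointwise ``linearity'' claim $Q^{\pi^\ast}(s,\pi)=\frac{1}{N}\sum_i Q^{\pi^\ast}(s,\pi_i)$ should be stated as an equality of joint expectations under the mixture (the sampled component is correlated with the visited state), which is at the same level of looseness as the theorem's own notation.
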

In words, the regret is bounded by the expected difference in
cost-to-go of the reference policy induced by a suboptimal action, and
increases linearly with the sequence length.  Readers are referred to
\citet{daume15lols} for the proof.

Now we specify the bound in our setting.  First we define suboptimality
of a reference policy.  Starting from any state, if the optimal
policy achieves terminal loss $L_{\text{opt}}$ , a reference policy
with suboptimality $\alpha$ achieves a loss no larger than $\alpha
L_{\text{opt}}$ ($\alpha \ge 1$).

Notice that the Q-values in $\epsilon_N$ differ only when a classification
error occurs.  We denote the classification error of a
policy $\pi$ as $\epsilon_{c} = \frac{1}{T}\sum_{t=1}^T \mathbb{E}_{s_t \sim d_\pi^t}\left[Pr(\pi(s_t) \neq \pi^\ast(s_t))\right]$, such that
with probability $(1-\epsilon_{c})$, $\pi$ chooses the same action
as $\pi^\ast$.

As we are bounding the error of a general framework without making
specific assumptions about the task predictor and the cost function, we
assume bounds on the following variables; however, we discuss the range
of these values at the end of this section.  Given any information set,
we denote by $\Delta_{\max}$ the maximum difference in task loss due to
changing one piece of information (a insertion, deletion or substitution).
Further, we let $Q^\ast_\max$ be the maximum cost-to-go from any state
of the reference policy, and $C$ be the maximum acquisition cost of one
piece of information.

With the above definitions, we have the following guarantee for active
information selection:
\begin{corollary}
If the returned policy has error rate $\epsilon_c$ when evaluated in the
multiclass classification setting, as an information selector it satisfies
\begin{equation*}
J(\pi) - J(\pi^\ast) \le T\delta,
\end{equation*}
where $\delta = \epsilon_c \left( \Delta_{\max}+\lambda C + (1-\alpha^{-1})Q^\ast_\max \right)$.
\end{corollary}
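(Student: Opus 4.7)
The plan is to reduce directly to Theorem~\ref{thm:l2s}, so that it remains to bound the per-state Q-gap
$Q^{\pi^\ast}(s,\pi) - \min_a Q^{\pi^\ast}(s,a)$
appearing in $\epsilon_N$. First I would split the expectation over $s \sim d_\pi^t$ according to whether the learned policy picks the same action as $\pi^\ast$. On the agreement event, which has probability at least $1 - \epsilon_c$ on average over $s$ and $t$ by the definition of the classification error, the Q-gap collapses to zero as observed in the paragraph preceding the corollary statement. Consequently the whole bound will scale with $\epsilon_c$, and I only need to control the disagreement event.

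On the event $\pi(s) \ne \pi^\ast(s)$, I would decompose
\begin{equation*}
Q^{\pi^\ast}(s,\pi) - \min_a Q^{\pi^\ast}(s,a) \le \bigl[Q^{\pi^\ast}(s,\pi(s)) - Q^{\pi^\ast}(s,\pi^\ast(s))\bigr] + \bigl[Q^{\pi^\ast}(s,\pi^\ast(s)) - \min_a Q^{\pi^\ast}(s,a)\bigr].
\end{equation*}
The second bracket measures how far the reference policy is from the truly optimal continuation at $s$. Using the $\alpha$-suboptimality assumption and writing $L_{\text{opt}}(s)$ for the optimal cost-to-go, I have $L_{\text{opt}}(s) \ge Q^{\pi^\ast}(s,\pi^\ast(s))/\alpha$ and $\min_a Q^{\pi^\ast}(s,a) \ge L_{\text{opt}}(s)$, so the bracket is at most $(1 - \alpha^{-1})Q^{\pi^\ast}(s,\pi^\ast(s)) \le (1 - \alpha^{-1})Q^\ast_\max$. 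The first bracket is the change in terminal value induced by swapping a single action at $s$ while following $\pi^\ast$ afterwards: by definition of $\Delta_{\max}$ the task-loss component changes by at most $\Delta_{\max}$, and by definition of $C$ the acquisition-cost component changes by at most $\lambda C$. Summing the three contributions and multiplying by $T$ via Theorem~\ref{thm:l2s} gives exactly $T \epsilon_c (\Delta_{\max} + \lambda C + (1-\alpha^{-1})Q^\ast_\max) = T\delta$.

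The main obstacle is the first bracket, because a one-step deviation at $s$ can in principle cause the greedy reference to take an entirely different trajectory afterwards, so the two terminal information sets are not automatically related by a single insertion, deletion, or substitution, and the total query counts need not differ by exactly one. I would address this either by reading $\Delta_{\max}$ and $C$ as the maximum terminal-value sensitivity under reference-policy rollouts (so that the bound holds by definition), or via a coupling argument: align the two rollouts step by step, exploit determinism of $\pi^\ast$ once the state is fixed, and verify that any persistent difference can be charged against at most one extra query plus one substitution in the final information set. This is the only place where the structure of the greedy reference really enters; once it is handled the remaining arithmetic to assemble $\delta$ is immediate.
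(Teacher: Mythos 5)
Your overall skeleton (reduce to Theorem~\ref{thm:l2s}, charge the agreement event to zero so everything scales with $\epsilon_c$, and split the disagreement gap into a one-step-deviation term plus a reference-suboptimality term) matches the paper, and your bound on the second bracket via $L_{\text{opt}}(s)\ge Q^{\pi^\ast}(s,\pi^\ast(s))/\alpha$ is fine. But the obstacle you flag at the end is exactly the step the corollary needs, and neither of your proposed fixes closes it. Bounding the first bracket $Q^{\pi^\ast}(s,\pi(s))-Q^{\pi^\ast}(s,\pi^\ast(s))$ by $\Delta_{\max}+\lambda C$ requires the two \emph{terminal} information sets to differ in a single element, and that is simply not guaranteed when both continuations are true greedy $\pi^\ast$ rollouts: the greedy choice at each step depends on the whole current set through $h$, so after a one-step deviation the two rollouts can pick different parts at every subsequent step and stop at different times. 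Your option (a), reinterpreting $\Delta_{\max}$ and $C$ as terminal-value sensitivities of full rollouts, proves a different statement with different (potentially much larger) constants than the ones defined in the paper, where $\Delta_{\max}$ is explicitly the task-loss change under a single insertion, deletion or substitution of the information set. Your option (b), the coupling claim that any persistent difference can be charged to one extra query plus one substitution, is false in general for greedy rollouts for the same reason.

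The paper's device for exactly this point is an auxiliary policy $\pi^{\text{aux}}$ that, after the deviating action $a_t$, blindly copies the actions of the reference trajectory started from $(s_t,a^\ast_t)$, time step by time step, \emph{regardless of its own state}. By construction its terminal set $\widetilde{\Gamma}^{\pi^\ast}(s_t,a^\ast_t,a_t)$ differs from $\Gamma^{\pi^\ast}(s_t,a^\ast_t)$ by exactly one substitution, so its loss exceeds $Q^{\pi^\ast}(s_t,a^\ast_t)$ by at most $\Delta_{\max}+\lambda C$. The $\alpha$-suboptimality assumption is then spent on relating the true value of the deviation to this copied rollout, $Q^{\pi^\ast}(s_t,a_t)\le\alpha\,\ell(\widetilde{\Gamma}^{\pi^\ast}(s_t,a^\ast_t,a_t),y)$, since the copied rollout is one feasible continuation and hence upper-bounds $L_{\text{opt}}$; writing $Q^{\pi^\ast}(s_t,\pi)=\frac{1}{\alpha}Q^{\pi^\ast}(s_t,\pi)+(1-\frac{1}{\alpha})Q^{\pi^\ast}(s_t,\pi)$ then yields the $(1-\alpha^{-1})Q^\ast_{\max}$ term as the residue. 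In your argument the $\alpha$ slack is already consumed by the second bracket, leaving nothing to absorb the trajectory divergence in the first bracket, so as written the proof has a genuine gap precisely where the greedy structure of $\pi^\ast$ has to be confronted.
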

\begin{proof}
Let $\Gamma^\pi(s,a)$ be the final information set obtained by executing $a$ in $s$ and then following $\pi$.
Now consider an auxiliary policy $\pi^{\text{aux}}$ whose actions only depend on $t$: it copies the action given by $\pi^\ast$ at the same time step after $t$ regardless of its own state. 
Let $a^\ast_t=\arg\min_a Q^{\pi^\ast}(s_t,a)$, 
$a_t=\pi(s_t)$.
The trajectories of $\Gamma^{\pi^\ast}(s_t,a^\ast_t)$ and $\Gamma^{\pi^\ast}(s_t,a_t)$ diverge from time $t$ when $a_t \neq a^\ast_t$.
Therefore starting from $s_t$, the final information sets obtained by $\pi^{\text{aux}}$ and $\pi^\ast$ differ by one element only due to $a_t \neq a^\ast$.
We use $\widetilde{\Gamma}^\pi(s,a,a')$ to denote the information set obtained by $\pi^{\text{aux}}$ copying $\pi$, 
which replaces information acquired by $a$ with that by $a'$ in $\Gamma^\pi(s,a)$. 
Therefore we have $\Gamma^{\pi^{\text{aux}}}(s_t, a_t)=\widetilde{\Gamma}^{\pi^\ast}(s_t,a^\ast_t,a_t)$.

Now we can write the $Q$-function as the loss in the terminal state.
To simplify notation, we use $\ell(\mathcal{X}', y)$ as a shorthand for $\ell(h(\mathcal{X}'), y, \mathcal{X}')$;
and similarly, $\ell_{\text{task}}(\mathcal{X}', y)$ for $\ell_{\text{task}}(h(\mathcal{X}'), y)$.
For the $i$th example we have
\begin{eqnarray}
Q^{\pi^\ast}(s_t,\pi) &=& \ell\left( \Gamma^{\pi^\ast}(s_t, a_t), y_i\right) \nonumber\\
&\le& \alpha \ell\left( \Gamma^{\pi^{\text{aux}}}(s_t, a_t), y_i \right) \nonumber\\
&=& \alpha \ell \left(\widetilde{\Gamma}^{\pi^\ast}(s_t, a_t^\ast, a_t), y_i\right ) .
\label{eqn:q-aux}
\end{eqnarray}
The inequality is due to the definition of suboptimality of $\pi^\ast$. 
Further, we have~\footnote{We omit $s,a$ in $\Gamma$ when obvious from the context.}
\begin{dmath*}
Q^{\pi^\ast}(s_t,\pi) - \min_a Q^{\pi^\ast}(s_t,a) \\
= \frac{1}{\alpha}Q^{\pi^\ast}(s_t,\pi) - Q^{\pi^\ast}(s_t,a^\ast) 
+ \left(1-\frac{1}{\alpha}\right) Q^{\pi^\ast}(s_t,\pi) \\
\le \ell \left(\widetilde{\Gamma}^{\pi^\ast}(a_t), y_i\right ) - Q^{\pi^\ast}(s_t,a^\ast)
+ \left(1-\frac{1}{\alpha}\right)Q^\ast_\max \\
= \ell_{\text{task}}\left(\widetilde{\Gamma}^{\pi^\ast}(a_t), y_i\right)
- \ell_{\text{task}}\left(\Gamma^{\pi^\ast}, y_i\right) 
+ \lambda \left(\mathcal{C}(\widetilde{\Gamma}^{\pi^\ast}(a_t))-\mathcal{C}(\Gamma^{\pi^\ast})\right) 
+ \left(1-\frac{1}{\alpha}\right)Q^\ast_\max \\
\le \Delta_\max + \lambda C + \left(1-\frac{1}{\alpha}\right)Q^\ast_\max.
\end{dmath*}
The first inequality is from Equation~\ref{eqn:q-aux}.
In the last step, the difference between task loss due to one-step deviation is bounded by $\Delta_\max$ by definition;
similarly, their costs differ by one element only which is $C$ at maximum.
To concisely present our result, below we denote the RHS (a constant) of the above inequality by $K$.

Finally, substituting Q-values in $\epsilon_N$ from Theorem~\ref{thm:l2s} with the above results, we obtain
\begin{dmath*}
\sum_{t=1}^T \mathbb{E}_{s_t \sim d_\pi^t} \left[ 
Q^{\pi^\ast}(s_t,\pi) - \min_a Q^{\pi^\ast}(s_t,a)
\right] \\
={\sum_{t=1}^T \mathbb{E}_{s_t \sim d_\pi^t} \left[ 
{Pr(a_t \neq a^\ast_t)} \left (
Q^{\pi^\ast}(s_t,\pi) - Q^{\pi^\ast}(s_t,a^\ast)
\right )
\right]} \\
\le {K \sum_{t=1}^T \mathbb{E}_{s_t \sim d_\pi^t} \left[{Pr(a_t \neq a^\ast_t)}\right] = TK\epsilon_c} .
\end{dmath*}
Therefore, from Theorem~\ref{thm:l2s} we have \[
\delta = \frac{1}{NT}\sum_{i=1}^N\sum_{t=1}^T TK\epsilon_c = K\epsilon_c.
\]
\end{proof}

\begin{figure*}
\begin{tabular}{p{0.55\textwidth}p{0.45\textwidth}}
\begin{minipage}{0.55\textwidth}
\begin{center}
\includegraphics[width=0.9\textwidth]{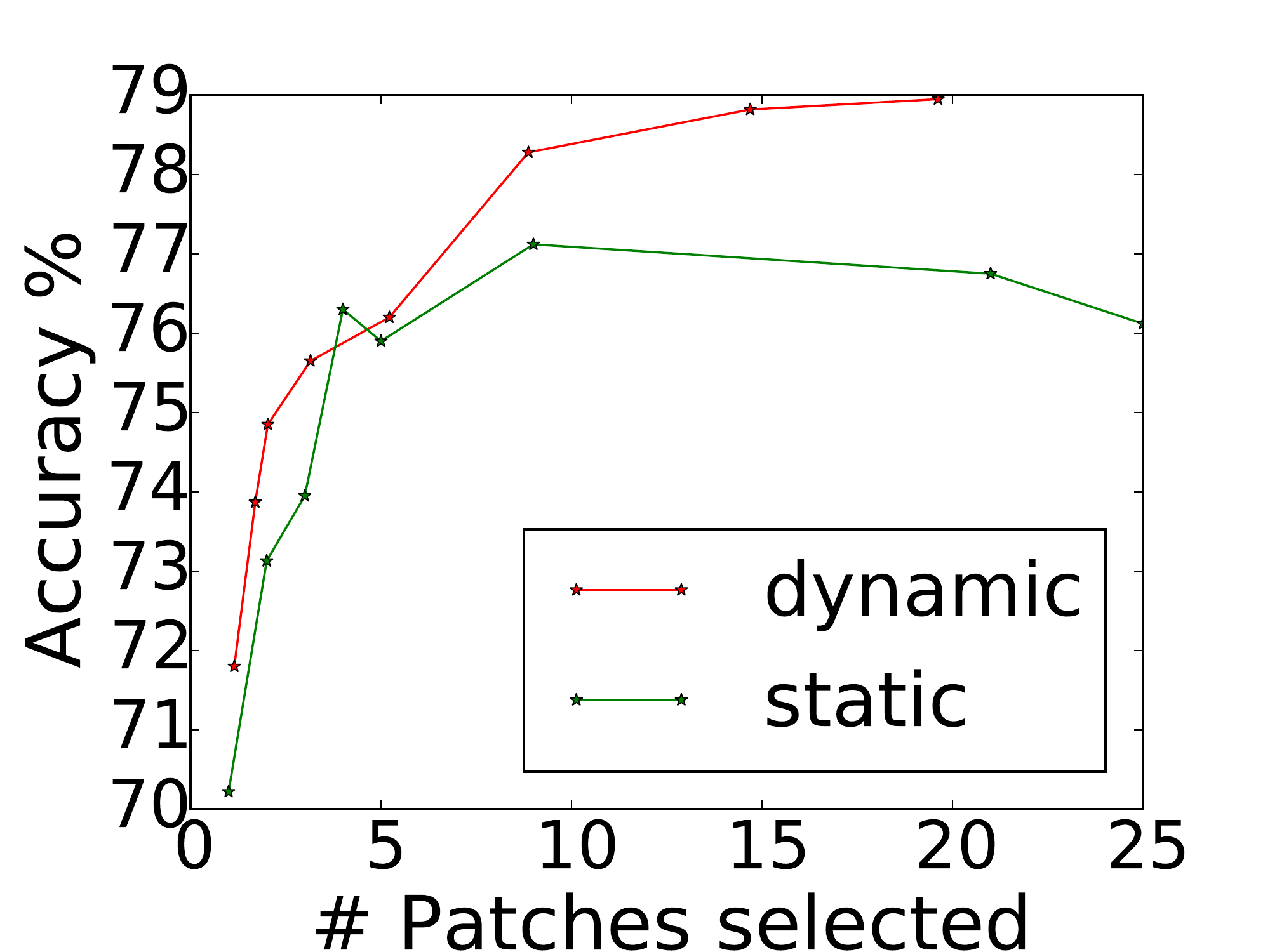}
\end{center}
\end{minipage}
&
\begin{minipage}{0.45\textwidth}
\begin{center}
\hspace{-6em}
\includegraphics[width=0.9\textwidth]{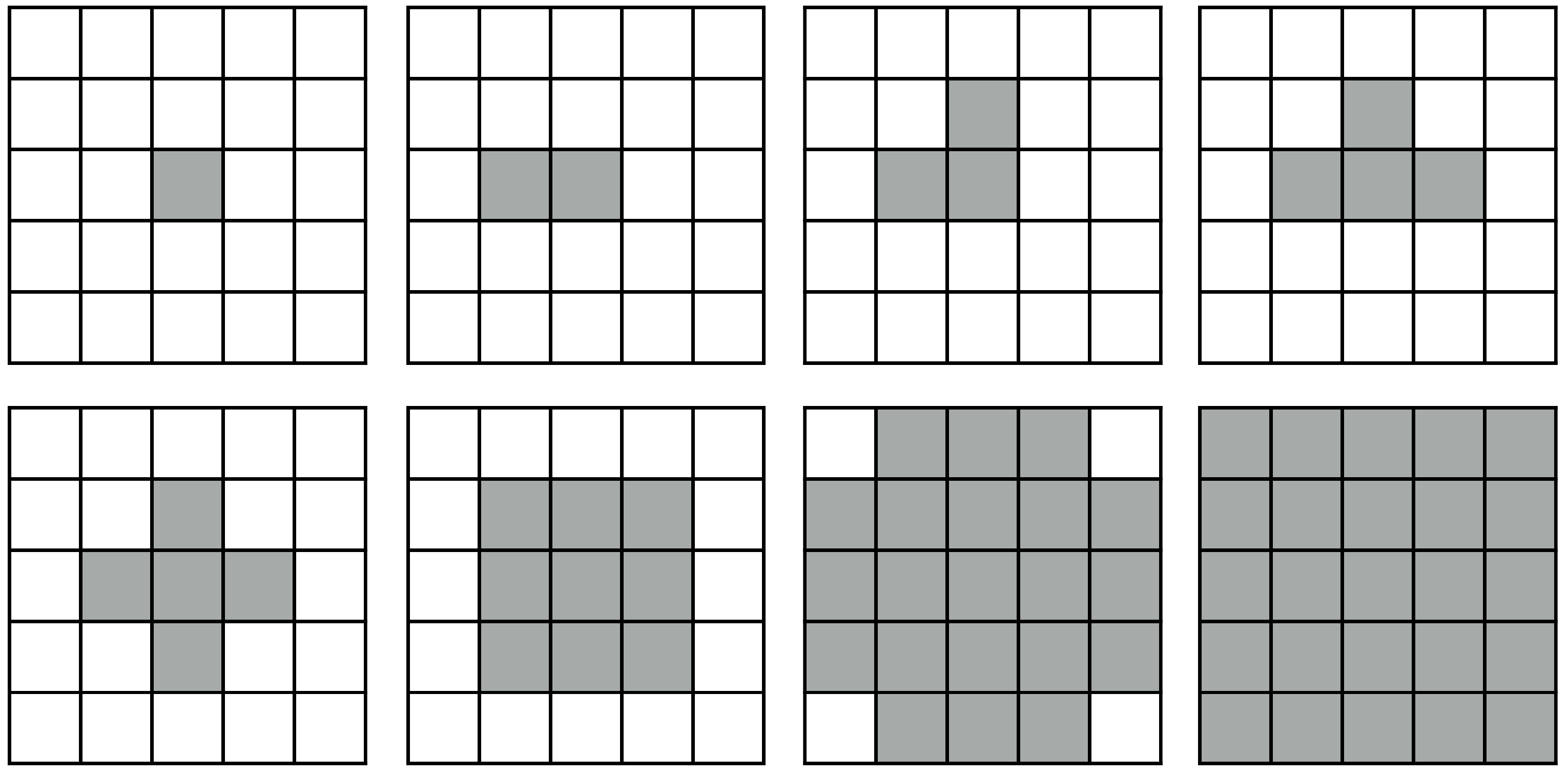}
\hspace{6em}
\end{center}
\end{minipage}
\end{tabular}
\caption{\emph{Left}: TB;DL Pareto frontiers of AIA
(dynamic) and static selection. \emph{Right}: patches selected by the
static baseline (in gray). 
}
\label{fig:tbdl-baseline}
\end{figure*}

\paragraph{Discussion}
In practice, $\alpha$ is often close to 1.  For example, if $\ell$ is a
matroid defined on $\mathcal{X}$, meaning that each part contributes to
the loss independently, then the greedy reference policy is optimal and
$\alpha=1$.  If $\ell$ is a monotone, submodular, non-negative function,
the greedy reference policy has suboptimality bounded by $(1-e^{-1})^{-1}
\approx 1.58$.  In the simple case where the cost function $\mathcal{C}$
measures the cardinality of a information set, we have $C=1$.  The maximum
cost-to-go $Q^\ast_\max$ is small when the state is on the trajectory of
$\pi^\ast$; otherwise it depends on the how well $\pi^\ast$ can recover
from a bad state.  In cases where adding information monotonically
improves the result---as we will see in the experiments--- $\pi^\ast$
can recover fast by selecting useful information even if some less
distinctive ones were added.

Therefore, the performance of our algorithm is mainly affected by two
factors.  The first is the classification error $\epsilon_c$ of $\pi$.
Given enough examples ($N \rightarrow \infty$), this is solely restricted
by the policy class $\Pi$ and the feature representation of states,
suggesting a richer policy class may work better.  The second is the
robustness of the task predictor $h$ to slight change in received
information, affecting $\Delta_{\max}$.  This can be addressed by
pre-training on randomly sampled subsets and by fine-tuning $h$ with
partial inputs induced by the learned policy $\pi$.

\section{Experiments}
\label{sec:experiments}
We evaluate our algorithm AIA on two tasks with different information
sets and task classifiers: sentiment analysis and object recognition.
We show that AIA consistently performs better than the static selection
baseline.  Furthermore, it achieves a good trade-off between cost and
accuracy by acquiring more on hard examples than on easy examples.

All of our implementation is based on Vowpal
Wabbit~\cite{langford2007},\footnote{\url{http://hunch.net/~vw}}, a fast
learning system that supports online learning and L2S.  Unless stated
otherwise, we run L2S for 2 passes over the training data; fine-tuning
the predictor starts at the end of the first pass.

\subsection{TL;DR: Sentiment Analysis of Book Reviews}

In this experiment the task is to 
predict a user's rating by reading their reviews sentence
by sentence from the beginning.  We use sentences as the units of
information.  The model dynamically decides whether to continue reading
the next sentence or to stop and output the current predicted rating,
hence we refer to it as TL;DR (``Too Long; Didn't Read'').

We evaluate TL;DR on book reviews from the Amazon product
data~\cite{amazon-data}, where each review has an associated rating
between 1 and 5 inclusive.  We select reviews with 5 to 10 sentences
and split the dataset into three sets: 1M for pre-training the task
predictor, 8M for L2S and fine-tuning and 1M for testing.  Our task
predictor is a linear multiclass classifier using unigrams and
bigrams features of tokenized text.  We pre-train the predictor
on complete reviews and all prefixes.

Our information selector is a quadratic multiclass classifier.
The features are the intermediate
scores (negative log likelihood) for each class as given by the task
predictor; the difference between the highest and the next-highest score,
i.e. the score margin; the KL-divergence between the current scores and
the class prior\footnote{The prior class distribution is imbalanced in
this dataset: more than 50\% reviews have a rating of 5.}; the current
prediction of the task predictor, i.e., the argmax of the scores; and
the number of sentences read so far.

We sweep over $\lambda$ to obtain a range of models that reads different
numbers of sentences on average.  Larger $\lambda$ discourages the model
to use more information.  We compare performance of our dynamic model
with a baseline static model given various fixed amounts of information.
Our baseline model always selects the first $k$ sentences ($k \in
[5, 10]$), and utilizes a task predictor trained on the first $k$
sentences using all the examples L2S uses as training data (i.e., both
the pre-training and fine-tuning data sets).  We report macro-F1 versus
the average percentage of sentences read in Figure~\ref{fig:tldr-pareto}
and our model completely dominates the static selection method.

To examine where the model decides to acquire more information, we
compute the average percentage of sentences for each rating.
We took four models with different $\lambda$s and plot the result in Figure~\ref{fig:tldr-pareto} (right).
As $\lambda$ increases, the model reads fewer sentences on average since the penalty on cost becomes higher.
In addition, the model reads much fewer sentences
for the easy rating-5 (a majority class in our dataset) reviews and more for confusing reviews
in the middle. This shows that the model learns to acquire information
adaptively according to example difficulty.

\subsection{TB;DL: Image Recognition}

In this experiment the goal is to recognize objects by looking at a few
patches from an image.  This scenario is a toy version of a robot/camera
trying to making sense of a scene by deciding where to focus. Our model
starts from an empty image and adaptively selects a sequence of patches
to examine until it feels confident about the prediction and stops.
We refer to the model as TB;DL (``Too Big; Didn't Look'').


We evaluate our algorithm on an image classification task from PASCAL VOC
Challenge 2007.  We resize all images to $256 \times 256$.  Each image is
divided into 25 equal-sized square patches, where each patch is a part.
Our task predictor takes features extracted from the selected patches
and predicts the objects in the image.  There are 21 object classes
including the background.  For simplicity, we focus on the task of
predicting whether a person is in the image (the majority class that
often co-occurs with other classes).  To obtain patch features, we label
each patch with its image (multi-)label and fine-tune the pre-trained
VGG-16~\cite{vgg16} model from Caffe with the patch examples.  We use
the predicted probabilities output by the softmax layer of VGG network
as the patch features.\footnote{We have also tried to use features from
the penultimate fully-connected layer but found it was not helpful.}
The state features are based on intermediate scores, similar to TL;DR.

We compare against static selectors that always select a fixed
subset of patches.  As it is computationally expensive to enumerate
all possible subsets, we heuristically selected a family of subsets
that cover the image from the center to the outer parts, as shown in
Figure~\ref{fig:tbdl-baseline} (right). We obtain similar results to
the sentiment analysis task: active information acquisition shows a
better trade-off than static selection.  In fact, the static baseline
eventually shows degradation when shown larger portions of the image.
We speculate this is because VOC images often contain multiple, scattered
objects with background clutter.  Under such conditions, a limited static
focus might be better than a larger one, but a dynamic focus is best.
This supposition is supported by our heat map experiment.

\begin{figure}[t]
\setlength\tabcolsep{0pt}
\begin{tabular}{cc}
\begin{minipage}{0.45\columnwidth}
\begin{center}
\includegraphics[width=0.9\columnwidth]{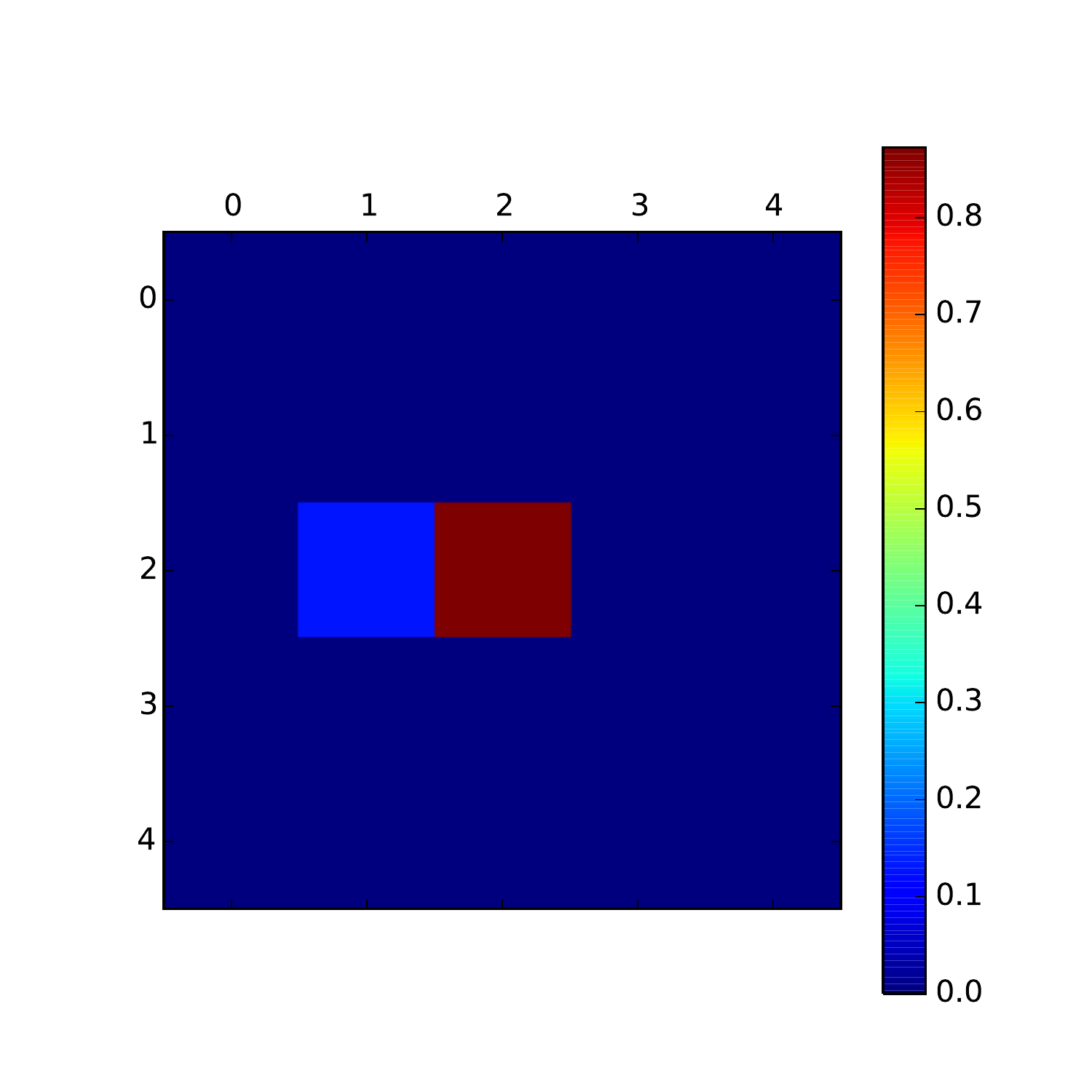}
\end{center}
\end{minipage}
&
\begin{minipage}{0.45\columnwidth}
\begin{center}
\includegraphics[width=0.9\columnwidth]{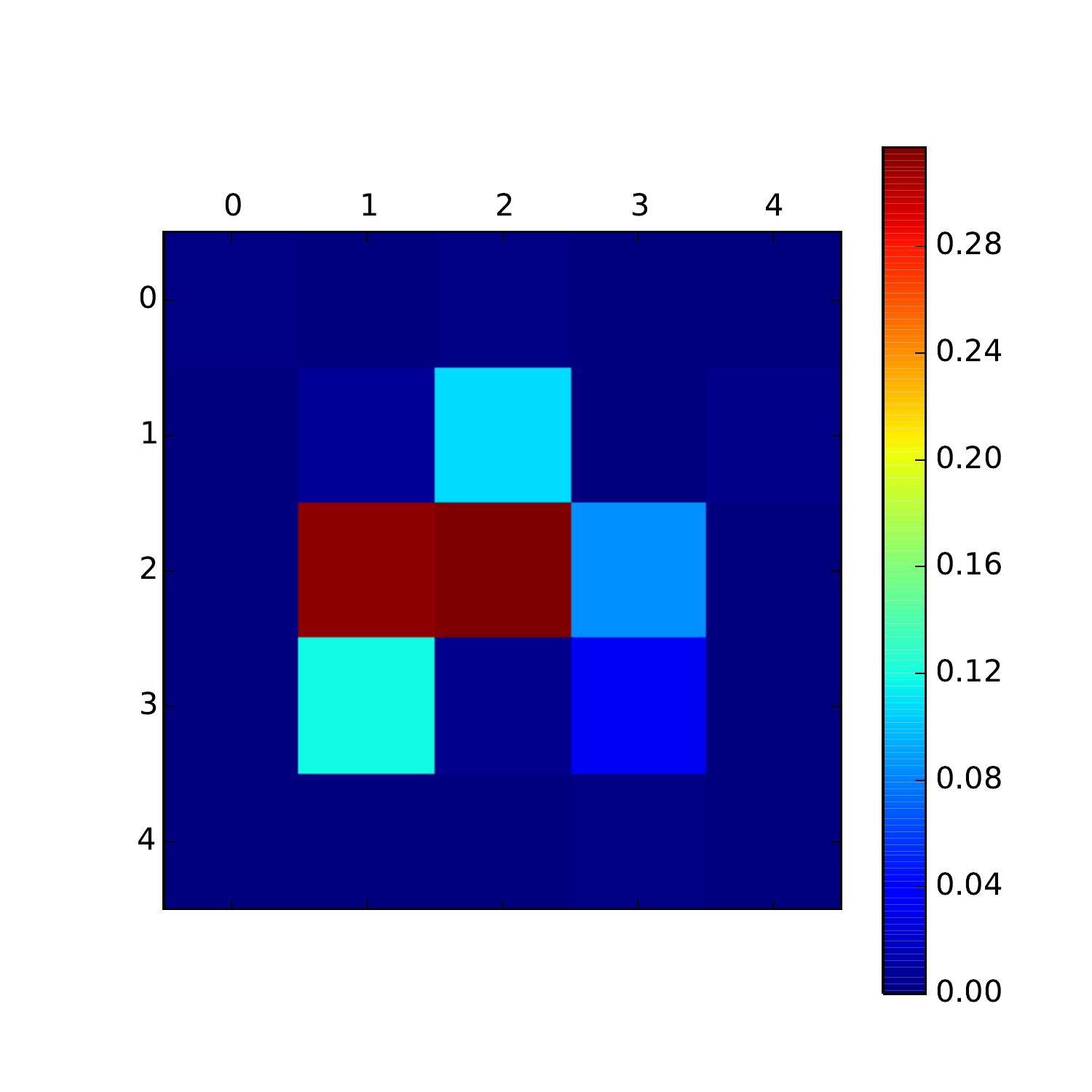}
\end{center}
\end{minipage}
\\ 
\begin{minipage}{0.45\columnwidth}
\begin{center}
\includegraphics[width=0.9\columnwidth]{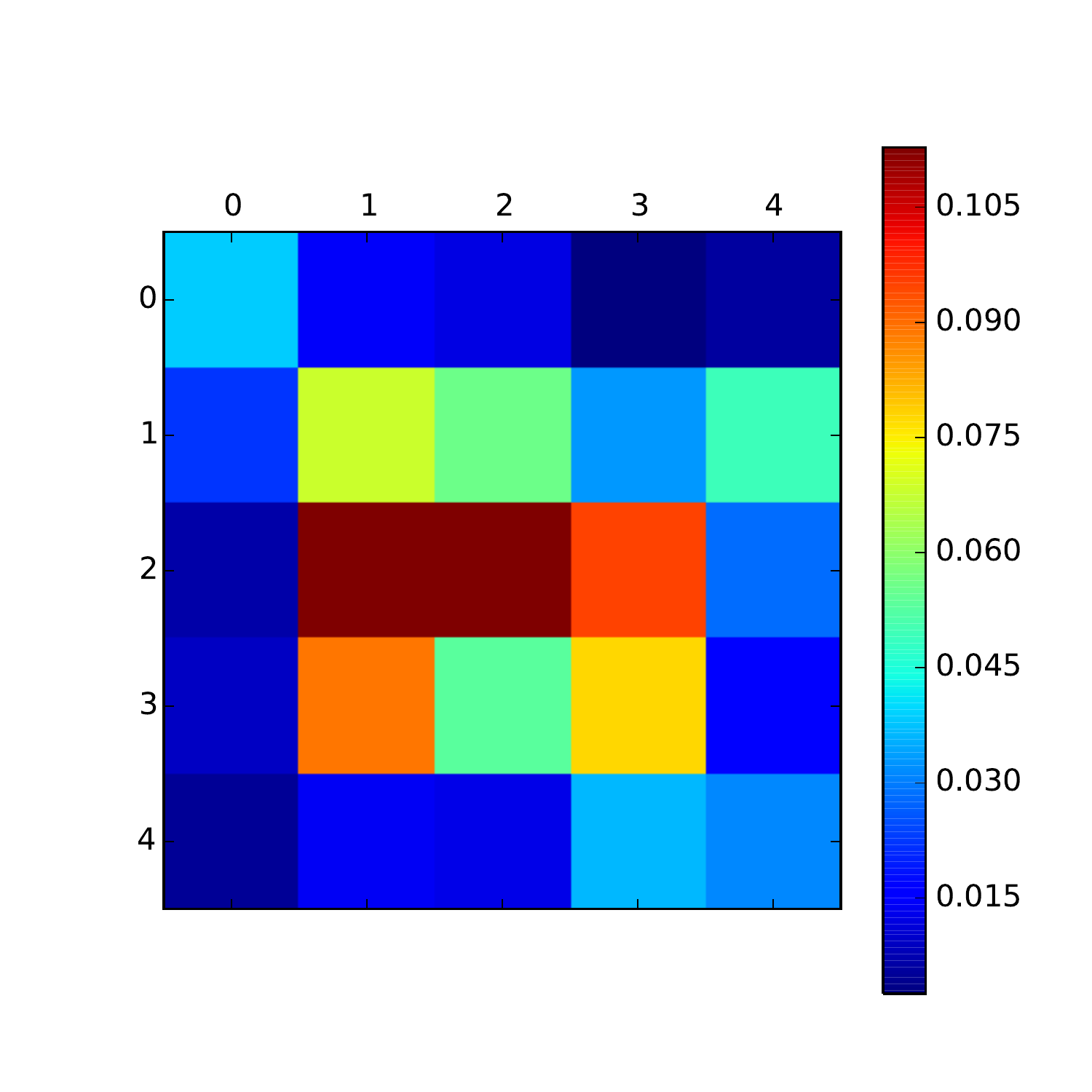}
\end{center}
\end{minipage}
&
\begin{minipage}{0.45\columnwidth}
\begin{center}
\includegraphics[width=0.9\columnwidth]{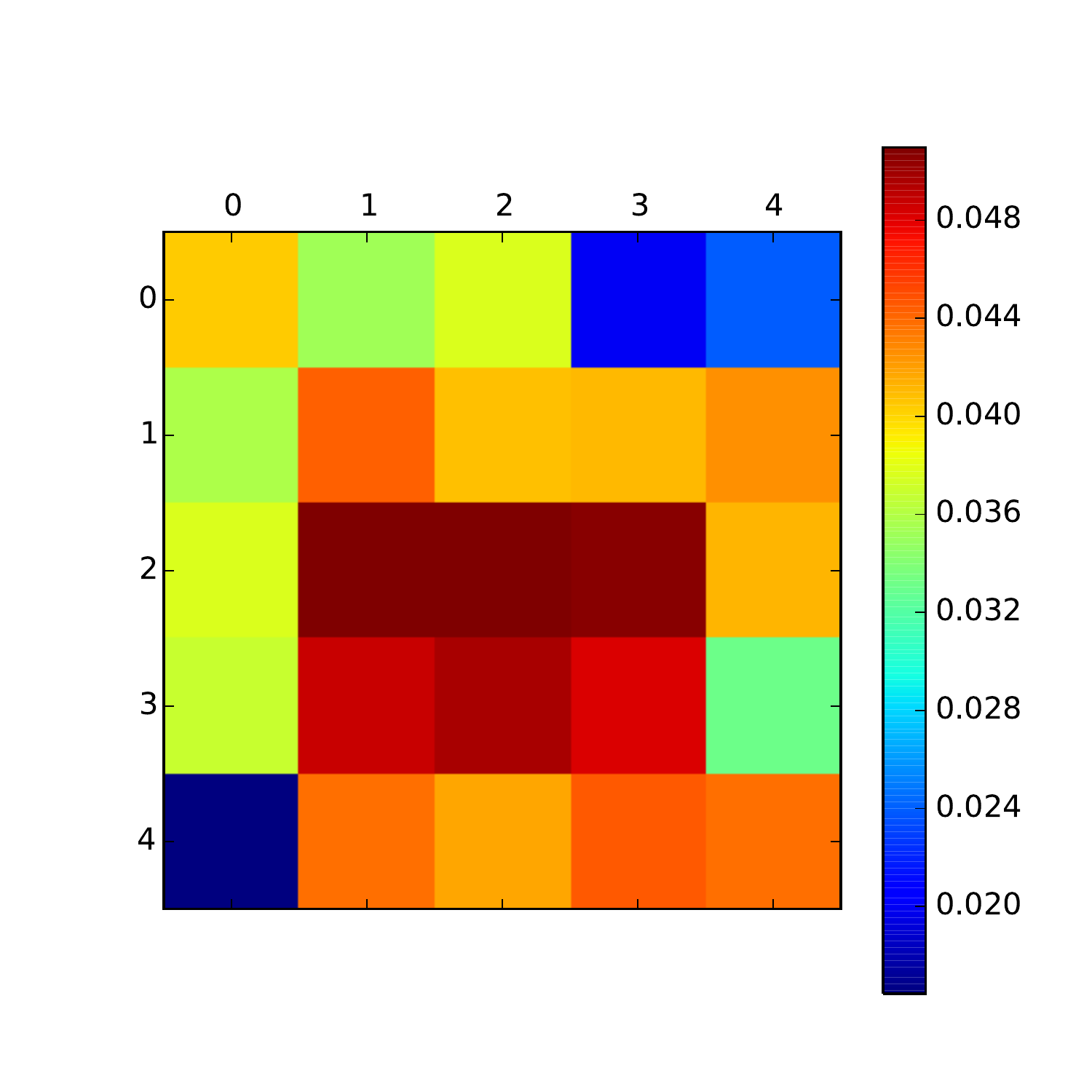}
\end{center}
\end{minipage}
\end{tabular}
\caption{Heat maps of frequencies a patch get selected at different $\lambda$. Decreasing $\lambda$ implies information is less costly to acquire.  \emph{Top Left}: $\lambda = 3.5$.  \emph{Top Right}: $\lambda = 2.0$.  \emph{Bottom Left}: $\lambda = 1.0$.  \emph{Bottom Right}: $\lambda = 0.0$. Best viewed in color.}
\label{fig:heatmap}
\end{figure}

To examine where the model pays most attention, we show heat maps of the
attention of models with different trade-offs in Figure~\ref{fig:heatmap}
(best viewed in color).
The result is consistent with our intuition: when the amount of
information is restricted, the learned policy looks mostly in the center 
where the object is more likely to be located; when more information is 
allowed, the policy dynamically explores outer parts.  
Furthermore, when information acquisition is free,
i.e., when $\lambda = 0$, the model still chooses to classify before
viewing the entire image, indicating a limited static focus can be
beneficial even absent acquisition costs.

\begin{figure}[ht]
\centering
\includegraphics[width=0.9\columnwidth]{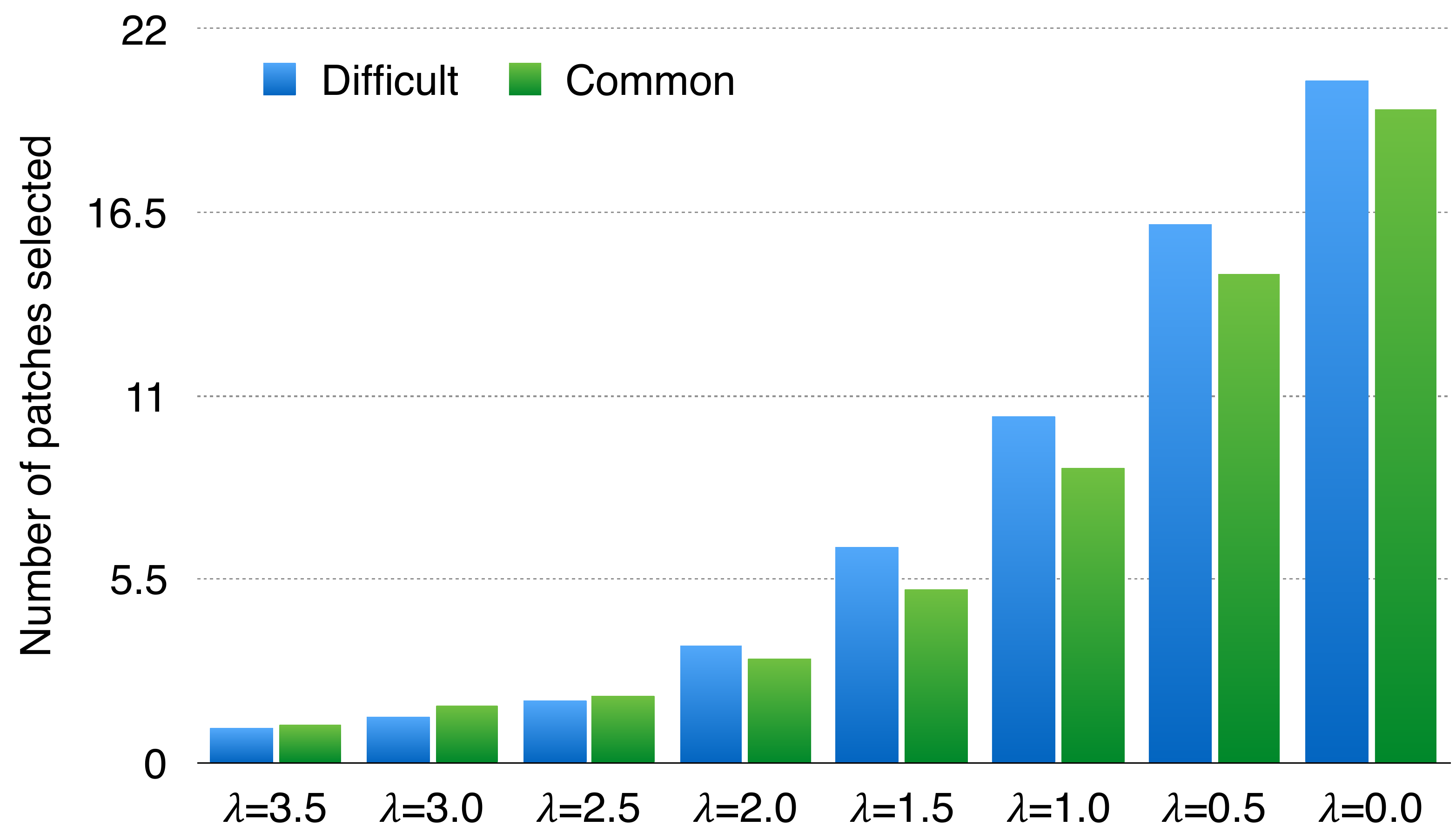}
\caption{Average number of patches selected by AIA for hard and easy examples. The selector looks at more patches on difficult examples.}
\label{fig:tbdl-hist}
\end{figure}

The VOC dataset also contains annotations about hard instances, which
we use to confirm that the model learns to use more information for
hard examples.  In Figure~\ref{fig:tbdl-hist}, we report the average number of patches selected for both hard and easy examples. 
When $\lambda$ is large, the policy selects approximately the same number of patches for both types of images, since the cost penalty does not allow for more exploration.
When the constraint on cost is relaxed, we see that
for difficult images the average number of patches
selected is consistently larger than that for common images.

\section{Conclusion}

In this paper we showed how to formulate the task of learning to acquire
information for solving a particular problem inside the L2S paradigm. We
proposed a computationally simple reference policy (that has access to the
training labels) and used imitation learning to compete with it, avoiding
the difficulties of more general reinforcement learning techniques. We
also proposed a loss function that explicitly balances the trade-off
between the task loss and the cost of information acquisition. The
effect of minimizing this trade-off is the learned policies focus on the
prominent parts of the input and spend more effort on examples that are
harder to classify.

We believe that much of the existing work on dynamic information gathering
can leverage imitation learning and the L2S framework instead of falling
back to more general reinforcement learning techniques. For example, in
early classification of time series, the future is eventually observed,
which facilitates constructing a reference policy at training time.
Therefore, fruitful directions for future work include adapting and
extending the ideas we presented in this paper to other domains where
the active collection of information can be simulated at training time.


 
\bibliography{../aia}
\bibliographystyle{icml2016}

\end{document}